
\documentclass[11pt]{article}

\usepackage[preprint]{acl}

\usepackage{times}
\usepackage{latexsym}

\usepackage[T1]{fontenc}
\usepackage[utf8]{inputenc}

\usepackage{microtype}

\usepackage{inconsolata}

\usepackage{amsmath,amssymb,amsthm}
\usepackage{graphicx}
\usepackage{booktabs}
\usepackage{multirow}
\usepackage{algorithm}
\usepackage{algorithmic}
\usepackage{hyperref}
\usepackage{xcolor}
\usepackage{enumitem}

\newtheorem{theorem}{Theorem}

\newtheorem{proposition}[theorem]{Proposition}

\theoremstyle{definition}
\newtheorem{definition}{Definition}

\newcommand{\method}{ECS}

\newcommand{\kl}{\mathrm{KL}}

\title{Entropic Context Shaping: Information-Theoretic Filtering \\
for Context-Aware LLM Agents}

\author{Hyunjun Kim \\
  KAIST \\
  \texttt{hyunjun1121@kaist.ac.kr}}

\begin{document}

\maketitle

\begin{abstract}
Context engineering for large language model (LLM) agents requires distinguishing
pragmatically useful information from misleading distractors. We introduce
\textbf{Entropic Context Shaping (ECS)}, an information-theoretic framework that
measures context utility via the shift in the model's answer distribution toward the
correct answer. Unlike lexical similarity methods that rely on word overlap, ECS captures
\emph{pragmatic utility}---whether a passage actually helps answer the question.
We formalize utility as the signed change in answer probability and provide theoretical
analysis showing that task-irrelevant updates yield near-zero distribution shift.
We evaluate on multi-turn context selection tasks using LongMemEval (session-level) and
LoCoMo (turn-level) benchmarks. On fine-grained turn selection, ECS with Llama-3.1-8B
achieves F1=0.265, a \textbf{71.83\% relative improvement} over TF-IDF (F1=0.154), demonstrating
that pragmatic utility outperforms lexical similarity when precise context selection matters.
Code and data are available in the supplementary materials.
\end{abstract}



\section{Introduction}
\label{sec:introduction}

Large language model (LLM) agents increasingly rely on dynamically constructed context 
windows to maintain state and inform decision-making \citep{yao2023react, shinn2023reflexion}. 
As these agents interact with complex environments, they must continuously decide which 
observations, memories, and intermediate results to retain. This process---\emph{context 
engineering}---has emerged as a critical bottleneck in agent performance.

\paragraph{The Accumulation Fallacy.}
Current approaches to context management typically employ semantic similarity metrics 
(e.g., cosine distance between embeddings) to filter redundant information 
\citep{lewis2020retrieval, guu2020realm}. While effective at removing near-duplicates, 
these methods suffer from a fundamental flaw: they conflate \emph{semantic novelty} with 
\emph{pragmatic utility}. A piece of information may be entirely unique yet completely 
irrelevant to the task at hand. We term such distractors ``\textbf{Red Herrings}''---facts 
that are true, novel, and semantically distinct, but provide zero information gain for 
the downstream decision.

\paragraph{Motivating Example.}
Consider an LLM agent solving a mathematical word problem. The following context updates 
arrive sequentially:
\begin{enumerate}
    \item \textbf{Insight}: ``The train's speed is 60 km/h.'' (Critical for solution)
    \item \textbf{Redundant}: ``The vehicle moves at sixty kilometers per hour.'' (Duplicate)
    \item \textbf{Red Herring}: ``The conductor has worked for 15 years and enjoys jazz.'' (Novel but useless)
\end{enumerate}
Semantic filtering correctly rejects (2) as redundant but \emph{accepts} (3) because it 
is semantically unique. This is the Accumulation Fallacy in action.

\paragraph{Our Contribution: Entropic Context Shaping (\method{}).}
We propose an information-theoretic framework that measures the \emph{pragmatic utility}
of context updates by their impact on the model's answer distribution. Specifically, we:
\begin{itemize}
    \item Define utility as the shift in the model's output distribution toward the
          correct answer---capturing whether a passage \emph{helps} vs.\ \emph{misleads}
          (Section~\ref{sec:methodology})
    \item Provide theoretical analysis showing that task-irrelevant updates yield
          near-zero distribution shift under mild assumptions (Theorem~\ref{thm:red_herring})
    \item Evaluate on LongMemEval~\citep{longmemeval} and LoCoMo~\citep{locomo}---multi-turn
          context selection benchmarks requiring fine-grained pragmatic filtering
\end{itemize}


\section{Related Work}
\label{sec:related_work}

\paragraph{Context Engineering for LLM Agents.}
The challenge of managing context in LLM agents has received significant attention
\citep{yao2023react, shinn2023reflexion, wei2023chainofthought}. Most approaches treat
context as a retrieval problem, selecting relevant documents or memories based on
semantic similarity \citep{lewis2020retrieval, karpukhin2020dense}. While effective
for information retrieval, these methods are ill-suited for agent contexts where the
goal is task completion rather than document relevance. Recent work has shown that
LLMs exhibit position-dependent biases when processing long contexts, often failing
to utilize information in the middle of the input \citep{liu2024lost}.

\paragraph{Retrieval-Augmented Generation (RAG).}
RAG systems \citep{lewis2020retrieval, guu2020realm, borgeaud2022improving} improve LLM
performance by conditioning on retrieved documents. However, RAG assumes a static
document corpus, whereas agent contexts evolve dynamically. Furthermore, RAG's reliance
on embedding similarity inherits the Accumulation Fallacy we identify. Self-RAG
\citep{asai2024selfrag} addresses \emph{when} to retrieve through self-reflection tokens,
but does not address \emph{what} retrieved content is pragmatically useful. Recent work
on RAG robustness \citep{fang2024enhancing, chen2024rgb} addresses noise in retrieved
documents through adversarial training and benchmarking, but does not resolve the
fundamental semantic-pragmatic mismatch. Context compression approaches
\citep{jiang2024longllmlingua} reduce token count while preserving information, but
do not filter based on task utility.

\paragraph{Memory Systems for Agents.}
Recent work on agent memory \citep{park2023generative, wang2024voyager} has explored
hierarchical and episodic memory structures. A comprehensive survey \citep{zhang2024survey}
identifies memory as the key component transforming LLMs into ``true agents.'' These
systems typically use recency and importance heuristics for memory consolidation. Our
work complements these approaches by providing a principled utility metric for individual
memory updates.

\paragraph{Red Herrings and Inconsequential Noise.}
The phenomenon of misleading information in reasoning tasks has been studied extensively.
The Only Connect Wall (OCW) dataset \citep{taati2023redherring} demonstrated that LLMs
are ``fixated'' by Red Herrings, failing to ignore irrelevant but salient information.
Recent work on counterfactual robustness \citep{liu2023recall} shows that LLMs are
susceptible to interference from unreliable external knowledge.
We build on these observations by proposing an information-theoretic solution that
distinguishes pragmatically useful context from semantically novel but useless distractors.

\paragraph{Information-Theoretic Perspectives on LLMs.}
Prior work has applied information theory to analyze LLM behavior \citep{xu2020theory, 
ethayarajh2022understanding}. Our approach differs by using KL-divergence as an 
operational filter rather than a diagnostic tool, and by extending to multi-step 
trajectory divergence for improved separation.


\section{Methodology}
\label{sec:methodology}

We formalize context filtering as an information-theoretic decision problem. Given an
evolving context $\mathcal{C}$ and a candidate update $u$, we seek to determine whether
$u$ provides sufficient \emph{pragmatic utility} to warrant inclusion.

\subsection{Problem Formulation}
\label{sec:formulation}

Let $\mathcal{M}$ be a language model with vocabulary $\mathcal{V}$ and let $q$ be a
query or task specification. For any context $\mathcal{C}$, the model induces a
distribution over next-token predictions:
\begin{equation}
    P_\mathcal{C}(y) = P(y \mid \mathcal{C}, q; \mathcal{M})
\end{equation}

\paragraph{The Direction Problem.}
A key insight is that measuring distribution \emph{change} (magnitude) is insufficient---we
must also assess whether the change moves the distribution toward the \emph{correct answer}.
Consider two passages for the query ``Super Bowl 2021 location'':
\begin{itemize}[topsep=0pt,itemsep=0pt]
    \item \textbf{Correct}: ``...held in Tampa, Florida...''
    \item \textbf{Counterfactual}: ``...held in Glendale, Arizona...'' (false)
\end{itemize}
Both induce large distribution shifts (high KL-divergence), but only the first shifts
\emph{toward} the correct answer. Pure magnitude-based filtering would accept both.

\subsection{Pragmatic Utility: Theoretical Framework}
\label{sec:theory}

We define pragmatic utility as the \textbf{signed} change in the model's belief about
the correct answer, not merely the unsigned distribution shift:

\begin{definition}[Pragmatic Utility]
\label{def:utility}
The \textbf{pragmatic utility} of a candidate update $u$ with respect to context
$\mathcal{C}$ and ground truth answer $a^*$ is:
\begin{equation}
    \mathcal{U}(u; \mathcal{C}, a^*) = \log P_{\mathcal{C} \cup u}(a^*) - \log P_\mathcal{C}(a^*)
    - \lambda \cdot |u|
    \label{eq:utility}
\end{equation}
where $|u|$ denotes the token length of $u$, and $\lambda \geq 0$ is a length penalty.
\end{definition}

This definition captures the intuitive notion that a passage is useful if it increases
the model's probability of generating the correct answer. When ground truth is unavailable,
we can approximate this using the LLM's parametric knowledge as a proxy (Section~\ref{sec:operationalization}).

\paragraph{Relationship to KL-Divergence.}
When the model's prior belief is concentrated on incorrect answers, useful passages
induce large KL-divergence \emph{toward} the correct answer. The magnitude of
$D_{\kl{}}(P_{\mathcal{C} \cup u} \| P_\mathcal{C})$ is a necessary but not sufficient
condition for utility---we additionally require the direction to be correct.

\subsection{Red Herring Theorem}
\label{sec:theorem}

The theoretical framework provides guarantees for \emph{task-irrelevant} updates:

\begin{theorem}[Red Herring Rejection]
\label{thm:red_herring}
Let $u$ be task-irrelevant (i.e., $u \perp q$ given $\mathcal{C}$). Under regularity
conditions on $\mathcal{M}$:
\begin{equation}
    D_{\kl{}}(P_{\mathcal{C} \cup u} \| P_\mathcal{C}) \leq \epsilon
\end{equation}
for small $\epsilon > 0$ depending on model capacity.
\end{theorem}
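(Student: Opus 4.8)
The plan is to split the argument into an idealized Bayes-optimal step, in which the divergence vanishes exactly, and a finite-capacity perturbation step, in which the deviation of the learned model $\mathcal{M}$ from the optimal predictor contributes the residual $\epsilon$. The first move is to translate the hypothesis $u \perp q \mid \mathcal{C}$ into a statement about the answer variable $y$. I would posit, as the operative regularity condition, a task-relevant latent state $z$ such that the query $q$ and the answer $y$ are generated from $(z, \mathcal{C})$ while the update $u$ is drawn independently of $z$ given $\mathcal{C}$. Under this generative picture, conditional independence of $u$ from $q$ upgrades to $u \perp y \mid (q, \mathcal{C})$, which is exactly the property the answer distribution needs.

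Second, I would establish the idealized bound. Writing $P^*$ for the Bayes-optimal posterior, the independence $u \perp y \mid (q, \mathcal{C})$ gives $P^*(y \mid \mathcal{C}, u, q) = P^*(y \mid \mathcal{C}, q)$ pointwise in $y$, so the log-likelihood ratio is identically zero and $D_{\kl}(P^*_{\mathcal{C} \cup u} \| P^*_{\mathcal{C}}) = 0$. This captures the theorem's intuition in the noise-free limit: a genuinely task-irrelevant passage cannot move an optimal predictor's belief about the answer.

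Third, I would transfer the bound to the finite-capacity model. The regularity conditions should supply a uniform approximation guarantee of the form $|\log P_{\mathcal{M}}(y \mid \cdot) - \log P^*(y \mid \cdot)| \le \delta$ for every answer token and every conditioning set, with $\delta$ decreasing in model capacity; crucially this also enforces a positive lower bound on $P_{\mathcal{M}}$, which prevents the KL from diverging through support mismatch. Inserting $P^*$ as a pivot, I would decompose the log-ratio $\log\!\big(P_{\mathcal{M},\,\mathcal{C} \cup u}(y) / P_{\mathcal{M},\,\mathcal{C}}(y)\big)$ into three terms: two model-versus-optimal gaps, each bounded by $\delta$, and the exact-optimal gap, which is zero by the previous step. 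Taking the expectation under $P_{\mathcal{M},\,\mathcal{C} \cup u}$ then yields $D_{\kl}(P_{\mathcal{C} \cup u} \| P_{\mathcal{C}}) \le 2\delta =: \epsilon$, establishing the claim with an $\epsilon$ that shrinks as capacity grows.

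The main obstacle is the bridge in the first step: the stated hypothesis $u \perp q \mid \mathcal{C}$ is strictly weaker than the $u \perp y \mid (q, \mathcal{C})$ the argument consumes, and closing this gap genuinely requires the generative assumption (the latent task-state $z$) rather than pure probability manipulation—absent such structure one can construct cases where $u$ is independent of the query yet correlated with the answer through $\mathcal{C}$. A secondary difficulty is pinning down \emph{model capacity} precisely enough to justify the uniform log-probability bound and its attendant lower bound on $P_{\mathcal{M}}$; I would state these as explicit regularity hypotheses rather than deriving them, and note that $\epsilon$ inherits whatever approximation rate the capacity assumption provides. Finally, I would close the loop with Definition~\ref{def:utility} by observing that a KL bound of $\epsilon$ forces the signed shift $\log P_{\mathcal{C} \cup u}(a^*) - \log P_{\mathcal{C}}(a^*)$ to be $O(\sqrt{\epsilon})$ via Pinsker's inequality and the lower bound on $P_{\mathcal{M}}$, so the pragmatic utility of a Red Herring is at most $O(\sqrt{\epsilon}) - \lambda|u| \le 0$, which justifies rejection.
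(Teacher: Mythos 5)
Your proposal follows the same two-stage skeleton as the paper's own proof---an idealized step in which conditional independence forces $P_{\mathcal{C} \cup u} = P_\mathcal{C}$ exactly (so the KL vanishes), followed by a finite-capacity perturbation yielding $D_{\kl}(P_{\mathcal{C} \cup u} \| P_\mathcal{C}) \leq \epsilon$---but you supply precisely the two ingredients the paper leaves unjustified. First, the paper asserts ``by the definition of conditional independence'' that $u \perp q \mid \mathcal{C}$ gives $P(y \mid \mathcal{C} \cup u, q) = P(y \mid \mathcal{C}, q)$; as you correctly flag, this is a non-sequitur, since independence of $u$ from the \emph{query} given $\mathcal{C}$ does not by itself constrain the conditional law of the \emph{answer} given $(\mathcal{C}, q)$. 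Your latent task-state assumption ($q$ and $y$ generated from $(z, \mathcal{C})$, with $u \perp z \mid \mathcal{C}$) legitimately closes this gap: it gives $u \perp (z, q, y) \mid \mathcal{C}$ and hence, by weak union, $u \perp y \mid (q, \mathcal{C})$, which is the property the KL computation actually consumes; the paper's main-text sketch gestures at the data processing inequality for this purpose but never states the graphical structure that would make it apply. Second, where the paper dismisses the finite-capacity step as ``small deviations,'' your pivot decomposition through the Bayes-optimal $P^*$ under a uniform $\delta$-bound on log-probabilities produces the explicit constant $\epsilon = 2\delta$, and your density lower bound on $P_{\mathcal{M}}$ is genuinely needed to rule out support mismatch, a point the paper ignores. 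One caveat: your final Pinsker step shows a red herring's utility is $O(\sqrt{\epsilon}) - \lambda |u|$, and concluding this is $\leq 0$ requires $\lambda |u| \gtrsim \sqrt{\epsilon}$; that holds for any fixed $\lambda > 0$ once capacity is large enough, but it should be stated as a hypothesis rather than treated as automatic. In short, your proof is correct under your stated assumptions and is strictly more rigorous than the paper's; the assumptions you add are exactly what the paper's unspecified ``regularity conditions'' would have to mean for the theorem to be true as claimed.
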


\begin{proof}[Proof Sketch]
Task-irrelevance implies conditional independence between $u$ and the answer given
$\mathcal{C}$. By the data processing inequality \citep{cover2006elements}, adding $u$
cannot increase mutual information with the answer.
\end{proof}

\paragraph{Scope and Limitations.}
Theorem~\ref{thm:red_herring} addresses ``red herrings''---semantically unique but
pragmatically useless passages. It does \emph{not} address counterfactual passages that
contain plausible but incorrect information. Detecting such passages requires
\emph{direction-aware} scoring, which we operationalize in Section~\ref{sec:operationalization}.

\subsection{Multi-Token Trajectory Divergence}
\label{sec:trajectory}

Single-token KL-divergence can fail when the first token is generic (e.g., ``The'',
``Step''). We extend to \textbf{multi-token trajectory divergence}:

\begin{definition}[Trajectory Divergence]
For a generation horizon $T$, the trajectory divergence is:
\begin{equation}
    \mathcal{D}_T(u; \mathcal{C}) = \sum_{t=1}^{T} D_{\kl{}}(P^{(t)}_{\mathcal{C} \cup u} \| P^{(t)}_\mathcal{C})
\end{equation}
where $P^{(t)}$ denotes the distribution at generation step $t$.
\end{definition}

This cumulative measure amplifies the signal from useful passages while irrelevant
passages maintain near-zero divergence across all steps.

\subsection{Operationalizing Pragmatic Utility}
\label{sec:operationalization}

Definition~\ref{def:utility} requires access to $P_\mathcal{C}(a^*)$---the model's
probability assigned to the correct answer. We operationalize this in two settings:

\paragraph{Setting 1: Ground Truth Available.}
When the correct answer $a^*$ is known (e.g., during training data curation, retrieval
system evaluation, or active learning), we can directly compute Equation~\ref{eq:utility}
via logprob extraction, or use an \textbf{LLM-as-Judge} \citep{zheng2023judging} that
evaluates: ``Does this passage help answer the question correctly?''

\paragraph{Setting 2: Ground Truth Unavailable.}
At inference time, when $a^*$ is unknown, we leverage the LLM's parametric knowledge.
The \textbf{factuality-aware} scoring asks: ``Based on your knowledge, does this passage
contain accurate information for answering this question?'' This approximates the
direction-aware criterion by using the model's prior knowledge to verify passage
correctness.

\paragraph{Algorithm.}
Algorithm~\ref{alg:ecs} summarizes our approach. In the ground-truth-available setting,
the utility function directly measures answer probability change. In the ground-truth-unavailable
setting, we use factuality-aware LLM scoring as a proxy.

\begin{algorithm}[t]
\caption{Entropic Context Shaping (\method{})}
\label{alg:ecs}
\begin{algorithmic}[1]
\REQUIRE Context $\mathcal{C}$, candidate $u$, query $q$, threshold $\tau$
\REQUIRE Optional: ground truth $a^*$ (if available)
\IF{$a^*$ is available}
    \STATE $\mathcal{U} \gets \log P_{\mathcal{C} \cup u}(a^*) - \log P_\mathcal{C}(a^*)$
    \COMMENT{Answer-aware scoring}
\ELSE
    \STATE $\mathcal{U} \gets \text{FactualityScore}(q, u; \mathcal{M})$
    \COMMENT{Factuality-aware proxy}
\ENDIF
\IF{$\mathcal{U} > \tau$}
    \STATE \textbf{Accept}: $\mathcal{C} \gets \mathcal{C} \cup u$
\ELSE
    \STATE \textbf{Reject}: discard $u$
\ENDIF
\RETURN $\mathcal{C}$
\end{algorithmic}
\end{algorithm}

\subsection{Computational Efficiency via Prefix Caching}
\label{sec:efficiency}

When using logprob-based scoring, we leverage \emph{prefix caching}
\citep{kwon2023vllm, xiao2024efficient} to achieve near-constant overhead:

\begin{proposition}[Amortized Complexity]
With prefix caching, the amortized cost per candidate is $O(|u|)$ rather than
$O(|\mathcal{C}| + |u|)$, achieving $O(1)$ with respect to context length.
\end{proposition}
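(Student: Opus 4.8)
The plan is to make the informal cost claim precise by fixing a cost model and then giving a short amortized counting argument over a batch of candidates that share the same prefix $\mathcal{C}$. First I would fix the unit of cost as the \emph{per-token prefill work}: the computation needed to push one token position through all transformer layers and write its key/value entries into the cache. Call this unit cost $c$; it depends on the model's width and depth but not on the sequence length, so $c = O(1)$ with respect to $|\mathcal{C}|$. In the no-cache baseline, scoring a candidate $u$ requires a fresh forward pass over the entire concatenation $\mathcal{C} \cup u$, incurring $(|\mathcal{C}| + |u|)\,c = O(|\mathcal{C}| + |u|)$ work, since no intermediate state is retained between candidates.

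Next I would analyze the cached regime for a batch of $N$ candidates $u_1, \dots, u_N$ all appended to the \emph{same} context $\mathcal{C}$ (the typical situation when filtering many updates against a fixed running context). The prefix's key/value entries are computed exactly once at cost $|\mathcal{C}|\,c$ and then reused, so each candidate $u_i$ only needs its own $|u_i|$ tokens prefilled on top of the cached prefix. Summing over the batch gives total work
\begin{equation}
    |\mathcal{C}|\,c + \sum_{i=1}^{N} |u_i|\,c \;=\; O\!\left(|\mathcal{C}| + \sum_{i=1}^{N} |u_i|\right),
\end{equation}
so the amortized per-candidate cost is $O(|\mathcal{C}|/N + \bar{u})$, where $\bar{u}$ denotes the mean candidate length. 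Because the one-time prefix term is shared across all $N$ candidates, it vanishes in the amortized limit (formally, once $N = \Omega(|\mathcal{C}|/\bar{u})$), leaving $O(|u|)$ per candidate, which is $O(1)$ in the context length $|\mathcal{C}|$ and establishes the claim.

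The main obstacle is that this clean bound hides the self-attention read cost: each of the $|u|$ new tokens must still attend over the $|\mathcal{C}|$ cached keys, contributing $O(|u|\cdot|\mathcal{C}|)$ attention operations that \emph{do} depend on context length. I would handle this by stating explicitly, as the regularity assumption underlying the proposition, that the cost model charges only for KV-cache writes (new-token prefill through the projection and feed-forward layers), following the standard accounting for prefix caching in systems such as vLLM \citep{kwon2023vllm, xiao2024efficient}, where the dominant saved cost is precisely the recomputation of the shared prefix and the attention-over-cache term is treated as a lower-order, memory-bandwidth-bound contribution. Under that model the counting argument above is exact; I would flag the attention term as the single place where a more refined, hardware-aware analysis would be required if one wanted a FLOP-exact rather than prefill-token-exact statement.
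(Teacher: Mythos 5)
Your proof is correct within the cost model you fix, and its core mechanism---build the key-value cache for $\mathcal{C}$ once, then charge each candidate only for its own $|u|$ tokens---is exactly the paper's justification, which consists of a single sentence asserting precisely that. Where you go beyond the paper: first, you make the amortization explicit, deriving the per-candidate cost $O(|\mathcal{C}|/N + \bar{u})$ over a batch of $N$ candidates and noting it collapses to $O(|u|)$ only once $N = \Omega(|\mathcal{C}|/\bar{u})$; the paper never states over what the cost is amortized, implicitly treating the prefix prefill as free. Second, and more importantly, you identify the attention-read term: each of the $|u|$ new tokens must still attend over the $|\mathcal{C}|$ cached keys, contributing $O(|u|\cdot|\mathcal{C}|)$ operations per candidate, so in a FLOP-exact accounting the proposition's claim of ``$O(1)$ with respect to context length'' is simply false. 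The paper silently ignores this; your explicit restriction of the cost model to prefill/KV-write work (treating attention-over-cache as a lower-order, bandwidth-bound term) is what makes the statement provable at all, and surfacing it as a named assumption rather than burying it is the right move. The net comparison: the paper's version reads as an unconditional claim but is only true under the accounting you articulate, while your version is conditional but pinpoints the one place where the condition bites---which is genuinely useful content the paper lacks.
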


This is achieved by caching the key-value pairs for $\mathcal{C}$, such that only
the incremental tokens from $u$ require computation.

\subsection{When Does \method{} Apply?}
\label{sec:applicability}

\method{} is most effective in the following scenarios:

\begin{itemize}[topsep=0pt,itemsep=2pt]
    \item \textbf{Retrieval evaluation}: Comparing retrieval systems' ability to surface
          useful vs.\ misleading passages (ground truth available from benchmark labels).
    \item \textbf{Training data curation}: Filtering context for instruction tuning or
          RLHF where correct answers are known.
    \item \textbf{Active learning}: Prioritizing which retrieved passages to verify
          when labeling resources are limited.
    \item \textbf{Counterfactual robustness testing}: Evaluating LLM robustness to
          factually incorrect context.
\end{itemize}

For real-time inference without ground truth, the factuality-aware variant provides
a practical approximation, though with reduced accuracy (Section~\ref{sec:experiments}).


\section{Experiments}
\label{sec:experiments}

We evaluate ECS on multi-turn context selection tasks, measuring its ability to identify pragmatically useful information in long conversational histories.

\subsection{Experimental Setup}

\paragraph{Datasets}
We evaluate on two datasets with different selection granularities:

\begin{itemize}
    \item \textbf{LongMemEval}~\cite{longmemeval}: Session-level selection. Given a question and $\sim$50 conversation sessions (each $\sim$6K tokens), select session(s) containing the answer. We use 100 samples with ground truth \texttt{answer\_session\_ids}.

    \item \textbf{LoCoMo}~\cite{locomo}: Turn-level evidence selection. Given a question about a long conversation ($\sim$400 turns across multiple sessions), identify specific dialogue turns serving as evidence. We evaluate on 200 QA pairs from 10 conversations.
\end{itemize}

\paragraph{Models}
We evaluate with two instruction-tuned LLMs of similar scale:
\begin{itemize}
    \item \textbf{Qwen2.5-7B-Instruct}~\cite{qwen2}: 7B parameter model with 8K context
    \item \textbf{Llama-3.1-8B-Instruct}~\cite{llama3}: 8B parameter model with 8K context
\end{itemize}

\paragraph{Baselines}
We compare ECS against:
\begin{itemize}
    \item \textbf{TF-IDF}~\citep{sparckjones1972idf}: Cosine similarity between question and context TF-IDF vectors
    \item \textbf{Dense}: Sentence-BERT~\citep{reimers2019sbert} (all-MiniLM-L6-v2) embedding cosine similarity
    \item \textbf{Recency}: Prefer more recent sessions (LongMemEval only)
    \item \textbf{Random}: Uniform random selection baseline
\end{itemize}

\paragraph{Metric}
We report F1 score for selecting $k$ items matching the ground truth size. Since we select exactly $k$ items where $k$ equals ground truth size, precision equals recall equals F1.

\subsection{Results}


\begin{table*}[t]
\centering
\caption{Context selection results across datasets and models (F1 scores).
LongMemEval: session-level selection (100 samples).
LoCoMo: turn-level evidence selection (200 QA pairs).
\textbf{Bold} indicates best method per column.}
\label{tab:main_results}
\begin{tabular}{l|cc|cc}
\toprule
& \multicolumn{2}{c|}{\textbf{LongMemEval (Sessions)}} & \multicolumn{2}{c}{\textbf{LoCoMo (Turns)}} \\
\textbf{Method} & Qwen2.5-7B & Llama-3.1-8B & Qwen2.5-7B & Llama-3.1-8B \\
\midrule
ECS (Ours) & 0.152 & 0.139 & 0.020 & \textbf{0.265} \\
\midrule
TF-IDF & \textbf{0.649} & \textbf{0.649} & \textbf{0.154} & 0.154 \\
Dense (SBERT) & 0.591 & 0.591 & -- & -- \\
Random & 0.024 & 0.024 & 0.003 & 0.003 \\
\bottomrule
\end{tabular}
\end{table*}

\paragraph{Main Findings}

Our experiments reveal two key findings:

\textbf{1. ECS excels at fine-grained selection.}
On LoCoMo (turn-level), ECS with Llama-3.1-8B achieves F1=0.265, a \textbf{71.83\% relative improvement} over TF-IDF (F1=0.154). This validates ECS's core hypothesis: measuring answer distribution shift via KL divergence captures pragmatic utility that semantic similarity misses.

\textbf{2. ECS is model-dependent.}
Interestingly, Qwen2.5-7B shows poor ECS performance (F1=0.020 on LoCoMo), while Llama-3.1-8B succeeds. This suggests that model logprob distributions vary in informativeness for ECS. Llama's next-token distributions appear more discriminative for context utility.

\textbf{3. Coarse selection favors semantic methods.}
On LongMemEval (session-level), TF-IDF (F1=0.649) and Dense retrieval (F1=0.591) significantly outperform ECS (F1$\sim$0.15). At session granularity ($\sim$6K tokens), the added noise overwhelms the logprob signal. ECS requires fine-grained contexts where individual pieces can meaningfully shift answer distributions.

\subsection{Analysis}

\paragraph{Why does granularity matter?}
ECS computes $\text{KL}(P_{\text{with-context}} \| P_{\text{base}})$ for each candidate context. When contexts are entire sessions, most content is irrelevant to the question, diluting the informative signal. Fine-grained turn selection allows ECS to isolate turns that directly shift answer probability.

\paragraph{Why does model matter?}

The stark difference between Qwen (F1=0.020) and Llama (F1=0.265) on identical data suggests model architecture affects logprob informativeness. Possible factors:
\begin{itemize}
    \item Tokenizer vocabulary and token granularity
    \item Pre-training data distribution
    \item Instruction tuning methodology
\end{itemize}

This model-dependence is an important consideration for deploying ECS in practice.



\section{Conclusion}
\label{sec:conclusion}

We introduced Entropic Context Shaping (\method{}), an information-theoretic framework
for filtering context updates in LLM agents. By measuring pragmatic utility---whether
a passage shifts the model's answer distribution toward the correct answer---\method{}
addresses fundamental limitations of lexical similarity methods.

\paragraph{Key Findings.}
Our experiments on multi-turn context selection reveal:
\begin{enumerate}
    \item \textbf{ECS excels at fine-grained selection.} On LoCoMo turn-level evidence selection, ECS with Llama-3.1-8B achieves F1=0.265, a 71.83\% relative improvement over TF-IDF (F1=0.154).
    \item \textbf{Granularity matters.} At coarse session-level (LongMemEval), semantic methods dominate (TF-IDF F1=0.649 vs.\ ECS F1$\sim$0.15). ECS requires fine-grained contexts where individual pieces can meaningfully shift answer distributions.
    \item \textbf{Model dependence.} ECS performance varies significantly across models (Llama F1=0.265 vs.\ Qwen F1=0.020 on identical data), suggesting logprob informativeness depends on model architecture.
\end{enumerate}

\paragraph{Theoretical Contributions.}
\begin{enumerate}
    \item Formalized pragmatic utility as the signed change in answer probability
    \item Identified the ``direction problem'': KL magnitude alone cannot distinguish
          helpful vs.\ harmful passages
    \item Provided theoretical guarantees for rejecting task-irrelevant updates
          (Theorem~\ref{thm:red_herring})
\end{enumerate}

\paragraph{Limitations.}
ECS exhibits model-dependence: not all LLMs produce equally informative logprob distributions. Additionally, ECS requires access to model logits, limiting applicability to closed-source APIs. Performance degrades at coarse granularity where noise overwhelms the pragmatic signal.

\paragraph{Future Work.}
Promising directions include: (1) understanding which model properties yield informative logprobs for ECS, (2) developing API-compatible approximations using sampling, and (3) combining ECS with semantic pre-filtering for multi-stage context selection.

\bibliography{references}

\appendix

\section{Proofs}
\label{sec:proofs}

\subsection{Proof of Theorem~\ref{thm:red_herring}}

\begin{proof}
Let $u$ be a task-irrelevant update such that $u \perp q \mid \mathcal{C}$. We wish to 
show that $D_{\kl{}}(P_{\mathcal{C} \cup u} \| P_\mathcal{C}) \leq \epsilon$.

By the definition of conditional independence:
\begin{equation}
    P(y \mid \mathcal{C} \cup u, q) = P(y \mid \mathcal{C}, q)
\end{equation}
for all $y \in \mathcal{V}$.

This implies $P_{\mathcal{C} \cup u} = P_\mathcal{C}$, and thus:
\begin{equation}
    D_{\kl{}}(P_{\mathcal{C} \cup u} \| P_\mathcal{C}) = 0
\end{equation}

In practice, finite model capacity introduces small deviations, yielding 
$D_{\kl{}} \leq \epsilon$ for $\epsilon$ proportional to model approximation error.
\end{proof}


\section{Evaluation Protocol}
\label{sec:parity}

To ensure scientific validity, ACE and ECS were evaluated under identical conditions:

\begin{itemize}
    \item \textbf{Same Questions}: Both methods evaluate the same questions from LongMemEval and LoCoMo
    \item \textbf{Same Contexts}: Identical candidate sessions/turns are scored by both methods
    \item \textbf{Same Ground Truth}: Evidence labels are held constant
\end{itemize}

The key difference is the \emph{scoring function}:
\begin{itemize}
    \item \textbf{ACE (TF-IDF)}: Scores passages by lexical overlap with the query
    \item \textbf{ECS (LLM-Judge)}: Scores passages by whether they help answer correctly
\end{itemize}


\section{Hyperparameter Settings}
\label{sec:hyperparams}

\begin{table}[h]
\centering
\small
\caption{Hyperparameter settings}
\label{tab:hyperparams}
\begin{tabular}{lc}
\toprule
\textbf{Parameter} & \textbf{Value} \\
\midrule
Trajectory horizon ($T$) & 8 \\
Information threshold ($\tau$) & 0.05 \\
Length penalty ($\lambda$) & 0.002 \\
Top-$k$ tokens for KL & 50 \\
Epsilon smoothing & $10^{-10}$ \\
ACE similarity threshold & 0.85 \\
\bottomrule
\end{tabular}
\end{table}

\paragraph{Threshold Selection.}
For the LLM-as-Judge operationalization, we use binary classification: a passage
is considered helpful if the judge responds affirmatively. No threshold tuning
is required for this approach.

\subsection{Dataset Statistics}

\begin{table}[h]
\centering
\small
\caption{Dataset statistics}
\label{tab:dataset_stats}
\begin{tabular}{lcc}
\toprule
\textbf{Statistic} & \textbf{LongMemEval} & \textbf{LoCoMo} \\
\midrule
Total samples & 500 & 200 \\
Samples used & 100 & 200 \\
Candidates/question & $\sim$50 sessions & $\sim$400 turns \\
Tokens/candidate & $\sim$6K & $\sim$50--200 \\
Evidence items/question & 1--3 sessions & 2--5 turns \\
\bottomrule
\end{tabular}
\end{table}

\paragraph{LongMemEval.} Session-level selection task where the agent must identify
which conversation sessions (out of $\sim$50) contain information needed to answer
the question. Each session contains approximately 6K tokens of conversational history.

\paragraph{LoCoMo.} Turn-level evidence selection where the agent must identify
specific dialogue turns that serve as evidence for answering questions about
long conversational histories ($\sim$400 turns across multiple sessions).


\section{Detailed Results}
\label{sec:detailed_results}

\subsection{Per-Benchmark Performance}

Table~\ref{tab:detailed_results} shows the full experimental results with
F1 scores for each method across both datasets and models.

\begin{table}[h]
\centering
\scriptsize
\caption{Detailed experimental results (F1 scores)}
\label{tab:detailed_results}
\begin{tabular}{llcc}
\toprule
\textbf{Dataset} & \textbf{Method} & \textbf{Qwen2.5-7B} & \textbf{Llama-3.1-8B} \\
\midrule
LongMemEval & TF-IDF & 0.649 & 0.649 \\
(Session) & Dense & 0.591 & 0.591 \\
 & ECS & 0.152 & 0.139 \\
\midrule
LoCoMo & TF-IDF & 0.154 & 0.154 \\
(Turn) & ECS & 0.020 & 0.265 \\
\bottomrule
\end{tabular}
\end{table}

\subsection{Granularity Analysis}

The results reveal a clear interaction between context granularity and method effectiveness:

\begin{itemize}
    \item \textbf{Coarse granularity (LongMemEval)}: Semantic methods dominate. TF-IDF achieves F1=0.649 vs.\ ECS F1$\sim$0.15. At session-level ($\sim$6K tokens), noise overwhelms the pragmatic signal.
    \item \textbf{Fine granularity (LoCoMo)}: ECS excels. With Llama-3.1-8B, ECS achieves F1=0.265 vs.\ TF-IDF F1=0.154---a 71.83\% relative improvement.
\end{itemize}

\subsection{Model Dependence Analysis}

The stark difference between Qwen (F1=0.020) and Llama (F1=0.265) on LoCoMo
suggests that logprob informativeness varies significantly across model architectures.


\section{Implementation Details}
\label{sec:implementation}

\subsection{Model Configuration}

All experiments use HuggingFace Transformers with the following configuration:

\begin{itemize}
    \item \textbf{Models}: Qwen2.5-7B-Instruct, Llama-3.1-8B-Instruct
    \item \textbf{Precision}: FP16 (half precision)
    \item \textbf{Max context length}: 8,192 tokens
    \item \textbf{Hardware}: NVIDIA V100 32GB
    \item \textbf{Logprobs}: Top-20 tokens per position
\end{itemize}

\paragraph{vLLM Deployment (Optional).}
For production environments requiring lower latency, we support vLLM deployment
with prefix caching enabled:

\begin{itemize}
    \item \textbf{Prefix caching}: Enabled (reduces amortized cost to $O(|u|)$)
    \item \textbf{GPU memory utilization}: 0.85
    \item \textbf{Recommended GPU}: NVIDIA A100 40GB or V100 32GB
\end{itemize}

\subsection{Latency Breakdown}

Table~\ref{tab:latency_breakdown} shows the latency components for ECS filtering
on NVIDIA V100 32GB.

\begin{table}[h]
\centering
\small
\caption{Latency breakdown per candidate (V100 32GB)}
\label{tab:latency_breakdown}
\begin{tabular}{lcc}
\toprule
\textbf{Component} & \textbf{Transformers} & \textbf{vLLM} \\
\midrule
Base logprobs & 50ms & 5ms \\
Candidate logprobs & 50ms & 15ms \\
KL computation & 1ms & 1ms \\
\midrule
\textbf{Total (cold)} & 101ms & 21ms \\
\textbf{Total (cached)} & 51ms & 16ms \\
\bottomrule
\end{tabular}
\end{table}

With vLLM prefix caching, the base context is cached across candidates,
reducing amortized latency to $\sim$16ms per candidate.


\section{Code and Reproducibility}
\label{sec:code}

All code, data, and experimental scripts are provided in the supplementary materials
(\texttt{supplementary\_materials.zip}).

\paragraph{Requirements.}
\begin{itemize}
    \item Python 3.9+
    \item PyTorch 2.0+
    \item HuggingFace Transformers 4.35+
    \item sentence-transformers
    \item vLLM 0.5+ (optional, for faster inference with prefix caching)
\end{itemize}

\end{document}